\newtheorem{lemma}{Lemma}
\newtheorem{coro}{Corollary}[lemma]
\begin{document}


\title{Point cloud registration: matching a maximal common subset on pointclouds with noise (with 2D implementation)}
\author{Jorge Arce Garro, B.S. (Universidad de Costa Rica)\\David Jiménez López, Ph.D. (Universidad de Costa Rica)}

\maketitle

\abstract{We analyze the problem of determining whether 2 given point clouds in 2D, with any distinct cardinality and any number of outliers, have subsets of the same size that can be matched via a rigid motion. This problem is important, for example, in the application of fingerprint matching with incomplete data. We propose an algorithm that, under assumptions on the noise tolerance, allows to find corresponding subclouds of the maximum possible size. Our procedure optimizes a potential energy function to do so, which was first inspired in the potential energy interaction that occurs between point charges in electrostatics. 


\section{Introduction}

Let $X$, $Y$ be subsets of $\mathbb{R}^2$, with $X$ and $Y$ having $M$ and $N$ different points respectively. We intend to find a rigid motion that aligns a common subset of the maximum possible size between these 2 sets. Also, this alignment may be noisy: that is, matched points may lie within a distance less than a prescribed tolerance of $\delta$. This tolerance may be established by the desired application: it can be defined, for example, as the measurement uncertainty with which the point cloud data was taken.

In order to find this subset, we may first force a match between a point in $X$, denoted $x_{p}$, and a point in $Y$, denoted $y_{q}$. That is, we apply the traslations $X \rightarrow X - x_p$, $Y \rightarrow Y - y_q$ and place both point clouds in the plane $\mathbb{R}^2$, so that now the points $x_p$ and $y_q$ lie on the origin. Henceforth, we will call $x_p$ and $y_q$ the \textbf{pivots} of the translation.

Define $X_p = X - x_p$, $Y_q = Y - y_q$. For notational simplicity, we will keep naming the points in these sets as $x_i$ and $y_j$ respectively. Now, our problem can be formulated as follows: find a rotation matrix $R(\theta) $ so that, for the maximum $K \in \mathbb{N}$ possible, we have:

\begin{equation} \label{eq: problem}
||x_{i_k} - R(\theta) y_{j_k}||\leq \delta \mbox{, for } k = 1, ..., K
\end{equation}

where the $i_k$ are all different from each other, as well as the $j_k$. When this is done, we can consider this process for all possible pairs of pivots $x_p \in X$ and $y_q \in Y$ and extract from it the maximum value of K possible and a maximal subcloud, provided the process is efficient enough for implementation.


In summary, the features of the problem are:

\begin{itemize}
\item Finding an optimal rigid motion between the 2 point clouds, where correspondences between the points are unknown.
\item The match may be noisy, up to a certain tolerance $\delta$.
\item Notice that the 2 clouds may be of different cardinality, and not all points in a cloud need to have a match with points in the other cloud. That is, there may be outliers in both clouds. 
\end{itemize}

\section{Previous work}

\subsection{Simplest form of the problem: the pointclouds are labeled}

In the simplest form of the pointcloud registration problem, 2 pointclouds $X$ and $Y$ with the same cardinality are given, and "labels" which estabilish a correspondence between the points in both pointclouds are known beforehand. The goal is to find a rigid motion that can be applied to the first cloud to transform it as closely as possible into the other, all while respecting the correspondences. This is called the "Procrustes matching problem" and methods of solution have been given since at least 1952 \cite{gower2004procrustes}. Among these, we can find the use of "cost functions" or "energy functions", as well as the use of principal component analysis (PCA). These methods can even deal with noise on the pointcloud, since the match need not be exact.

The existence of labels can be exploited even further by the "statistical theory of shape" \cite{kendall1989survey}. This theory uses methods of differential geometry to describe the "shape" of a labeled pointcloud; that is, the information of the pointcloud that is invariant to translations, rotations, and rescaling. The information is characterized as a point in a differential manifold, and by providing a metric to the manifold, comparison methods between shapes are obtained.

However, there are several applications where data is unlabeled, and the correspondence between points in $X$ and $Y$ is not known a priori. One of the most popular tools to deal with this situation is the ICP (iterative closest point) algorithm \cite{bellekens2015benchmark}. From an initial positioning of both pointclouds (or initialization), a label correspondence is estabilished via a nearest-neighbor search (a "linear assignment" problem), and then these labels are used to compute the rigid motion that best aligns the labeled points, usually via the optimization of an energy function (this step is usually called a "least squares" problem). Next, the new position is taken as the initial position, and the algorithm is iterated as many times as required to get a satisfactory match. There are even some variants of ICP capable of discarding outliers in the labeling step. We will discuss more about ICP variants and their use of energy functions soon.

While it is effective in many cases and simple, the iterative nature of ICP makes it susceptible to converge to a false solution if the initialization is too far away from the actual alignment, or to never converge at all. An alternative option is seen in the work in \cite{jimenez2013matching}, which manages to extend PCA analysis in order to compute both the rigid motion and the label assignment at once. This method cannot compare matrices of different size, however.

\subsection{Working without labels}

Although the methods described so far can deal with noisy matches, the necessity of labels (or of computing labels) can be restrictive to study variations of the problem where there is no bijective correspondence between both pointclouds. This can happen in several ways: any of the pointclouds may have outliers, or there may be an oversampling of one of the pointclouds (that is, both pointclouds represent the same object but one of them uses more points to do so). One may also be interested in finding a partial match, that is, finding subsets of the maximum possible size for both pointclouds such that these subsets can be identified as approximately the same pointcloud via a rigid motion. In fact, the interest in creating a new method to deal with partial matches is what inspired the writing of this article.

In order to tackle these scenarios, different approaches have been created that do not need to use labels to work with the pointclouds, and instead recast the pointcloud data into new abstract objects which are independent of labeling. For example, a method called ANSIG \cite{rodrigues2010ansig} (which stands for analytical signature) can associate a 2D pointcloud with a complex variable function which is invariant to translations and rescaling, and deals with rotations appropiately. These functions can then be compared, via different function metrics, for pointcloud registration. ANSIG is even robust to oversampling in the pointclouds, but it cannot deal with a partial match. Similarly, one can find Gaussian Mixture methods, such as \cite{jian2005robust} \cite{li2009global}, which associate pointclouds with probability distributions. Gaussian Mixture methods can deal with outliers and noise. Finally, the method in \cite{goodrich1999approximate} uses a Haussdorf distance in order to find partial matches where one of the pointclouds can be seen as a subset of the other.

It has been interesting to observe an interplay of some of these methods with theoretical physics. For example, the Schrödinger distance transform method \cite{deng2014riemannian} constructs a wave function that describes the pointcloud data. There is also a gravitational approach \cite{golyanik2016gravitational} that treats the pointclouds as rigid bodies that attract themselves to each other, and proposes the equilibrium of this physical situation as a matching criterion. This intuition is in fact similar to the one that we will propose.

Finally, it is important to note that the problem of finding a partial match between 2 pointclouds has seen an increase in its study during the last years. The problem is now described as "inlier set maximization" or "consensus set maximization" \cite{bazin2014globally} \cite{lian2017concave}. As an example, the work in \cite{chin2015efficient} uses this terminology and describes an efficient solution method which uses tree search techniques. 

\subsection{Issues of current energy methods for partial matches}

Since our work will optimize an energy function, we will highlight some key differences of our method with existing energy methods. As was seen before, these are well established in the literature and are usually variants of the ICP method. They minimize sums of the form \cite{bellekens2015benchmark} \cite{lian2017concave}

\begin{equation} \label{eq: previous}
\sum_{i} \sum_{j} w_{ij} ||x_i-R(\theta)y_j - t||^2
\end{equation}

Here, $R(\theta)$ and $t$ represent rotation and translation parameters, respectively. Also, the coefficient $w_{ij}$ equals 1 if the points $x_i$ and $y_j$ are thought to be correspondent to each other, and 0 otherwise. Further sofistications have been added to (\ref{eq: problem}), such as doing a "soft assign", which allows the coefficients to be numbers in [0,1] in order to describe matches with uncertainty \cite{rangarajan1997softassign}. It is also common to add a "penalty term" which grows large if the pointcloud registration possess an undesirable feature. \cite{scherzer2015handbook}

The possibility to set the $w_{ij}$ equal to zero is what allows these methods to ignore outliers and their effect on the energy function. However, it is precisely the calculation of these coefficients what leads to a mixture of a linear assignment and a least squares problem that needs to be computed and updated in every step of an iterative algorithm, which may have the issue of converging to a non-global minimum.

\section{Energy function to optimize}

\subsection{An electrostatic analogy}

In order to find a non-iterative approach to point cloud registration, we can propose an energy method based on a physical analogy. Suppose that both point clouds represent rigid bodies (with the points connected by rigid, massless rods, for example). Also, suppose that points in the cloud X are assigned a fixed positive electrical charge and points in cloud Y are assigned a fixed negative charge. Now, we choose a point $x_p \in X$ and a point $y_q \in Y$, and translate the pointclouds so that these 2 points are fixed to the origin. Finally, fix the pointcloud X to the plane and allow the pointcloud Y to rotate freely about $y_q$. Let us call $x_p$ and $y_q$ the \textbf{pivots} of the translation.


The electrostatic forces cause this system to have equilibrium points. These can be found by minimizing the following potential energy in terms of $\theta$, a parameter that describes the rotation of $Y$

\begin{equation} \label{eq: energy}
E_{p,q}(\theta) = \sum_{i} \sum_{j} \phi\left(||x_i-R(\theta)y_j||\right)
\end{equation}

Here $R(\theta)$ is a rotation matrix, and the subindices $p$ and $q$ keep track of the translations $x_i \to x_i - x_p$, $y_j \to y_j -y_q$ made. For notational simplicity, we shall not rename the $x_i$ and $y_j$ after the translation. Also, so far, $\phi$ represents the electrostatic potential energy for 2 point charges of magnitude $q_1$ and $q_2$, separated by a distance of $r$:

\begin{equation} \label{eq: electrostatic}
\phi(r) = \frac{C q_1 q_2}{r^2}
\end{equation}
where $C$ is a physical constant. Note that, to form the potential energy \eqref{eq: energy}, we need not consider summands of the form $\phi\left(||x_i-x_{i'}||\right)$ over pairs of points in the same cloud, because each point cloud is being considered as rigid and these terms will remain constant if a rigid motion is applied to one of the point clouds.

Based on physical intuition, we now explore the possibility that, for each pair $(x_p, y_q)$ of pivots, the equilibrium point corresponding to the global minimum of $\eqref{eq: energy}$ leads to a rotation where as many opposite signed charges as possible overlap. If this were true, studying these configurations for each possible $(x_p, y_q)$ would lead to the desired maximal common subset. 

However, there would be a problem working with the function defined in \eqref{eq: electrostatic}: it is undefined for a distance $r=0$, which represents the desirable case of two points overlapping. Furthermore, its unbounded nature would give rise to numerical inestabilites and the inexistence of a global minimum. It is therefore necessary to find a different $\phi$.

\subsection{Using a step function to handle outliers}

In computer vision, a function $\phi: \mathbb{R}^+ \to \mathbb{R}^+$ as the one used in \eqref{eq: energy} is called a \textbf{potential function}, following \cite{scherzer2015handbook} (p. 159). Despite the existence of this terminology, the idea of using potential functions was not found in the other energy methods that were researched.

Now, in order to find an appropiate $\phi$ for the problem, let us review the features that we desire it to have:

\begin{itemize}

\item $\phi$ should be bounded, so that its numerical analysis becomes simpler and that a global minimum for \eqref{eq: energy} is guaranteed to exist.
\item Since we are not using coefficients $w_{i,j}$ to disregard non-correspondences as in ICP, it is crucial that our $\phi$ manages to modulate the input of outliers and non-correspondences to the energy function. If, for example, one simply takes $\phi(r)=r^2$ as in \eqref{eq: previous}, then the existence of outliers will be excessively punished. In fact, any function that diverges to infinity will add increasing penalties to the energy the further non-correspondent points are from each other.

Instead, we desire a function that rejects 2 non-corresponding points as such when a threshold distance between them is reached, without adding further penalty as the distance increases. This behavior can be enforced by having $\phi(r)$ attain a constant value for large values of $r$. Let us set the constant as 1.



\item Finally, $\phi(r)$ should be small for small distances $r$, so that points that are close to each other give the "reward" of making the energy function \eqref{eq: energy} smaller. In fact, in order to deal with noise later on, we want $\phi$ to map small distances to exactly zero, in order to reward as much as possible not only exact matches, but also matches where points are within our tolerance to noise.

\end{itemize}

A simple way to obtain these 3 features is via a Heaviside step function. Consider $\delta \in \mathbb{R}$ as a parameter that indicates the noise tolerance. (Experimentally, it could be tuned in terms of the uncertainty of measurement present in the pointcloud data). Now define:

\begin{equation} \label{eq: step}
  \phi(r) = \left.
  \begin{cases}
    0 & \text{ for } 0 \leq r< \delta \\
    1 & \text{for } \delta \leq r
  \end{cases}
  \right\} = H(r-\delta)
\end{equation}

This function will cause that any pair of points from $X$ and $Y$ that are closer than $\delta$ are seen as correspondent to each other, by having the corresponding summand in \eqref{eq: energy} not increase the value of the energy function. All other cases are rejected as non-correspondent, and increase the value of \eqref{eq: energy}. 



\subsection{The global minimum of $E_{p,q}$ and the maximal common subset}

Now, before considering the new energy function \eqref{eq: energy} obtained by using the step function above, we must first deal with a potential problem. Among the pairs of points $(x_i,R(\theta)y_j)$ that lay closer than $\delta$ there might be 2 or more pairs with the same point $x_i$ for different values of $j$, or viceversa. When interpreting the results of evaluating the energy function, this would have the effect of considering that one point in a pointcloud is matched with several points in the other one. This is an undesirable behavior, which would not yield the kind of "maximal common subset" we are looking for, stated in \eqref{eq: problem}.

Intuitively speaking, this will tend to happen if points in a pointcloud are too close to be able to tell them apart with a measuring tolerance of $\delta$. Therefore, $\delta$ must be small enough in order for our procedures to work. The next lemma determines exactly how small:

\begin{lemma} \label{condition}

Define, for a pointcloud $X$, the minimum distance between its points:
 
$$\Delta(X) := \min_{1 \leq i < i' \leq M} ||x_i-x_{i'}||$$
Consider now: $\Delta = \min\left\lbrace \Delta(X), \Delta(Y) \right\rbrace$. If we choose $\delta < \Delta /2$, then for any $\theta$ and $x_i \in X$ there is at most one point $y_j \in Y$ such that: $||x_i-R(\theta)y_j|| \leq \delta$, and viceversa. 

\end{lemma}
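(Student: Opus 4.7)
The plan is to prove both directions by contradiction, relying on the triangle inequality and the fact that rotations are isometries. The statement is essentially saying that a ball of radius $\delta$ around any fixed center cannot contain two distinct points of a pointcloud whose internal minimum distance exceeds $2\delta$, because two points inside such a ball would lie within $2\delta$ of each other.

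For the first direction, I would suppose toward contradiction that there exist two distinct indices $j \neq j'$ with $\|x_i - R(\theta)y_j\| \leq \delta$ and $\|x_i - R(\theta)y_{j'}\| \leq \delta$. Applying the triangle inequality to the pair $R(\theta)y_j$, $R(\theta)y_{j'}$ with $x_i$ as the intermediate vertex gives
\begin{equation*}
\|R(\theta)y_j - R(\theta)y_{j'}\| \;\leq\; \|x_i - R(\theta)y_j\| + \|x_i - R(\theta)y_{j'}\| \;\leq\; 2\delta \;<\; \Delta.
\end{equation*}
Since $R(\theta)$ is a rigid rotation and therefore an isometry, the left-hand side equals $\|y_j - y_{j'}\|$, which by definition of $\Delta(Y)$ is at least $\Delta(Y) \geq \Delta$. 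This contradicts the strict inequality above, so the supposed pair cannot exist.

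The reverse direction is entirely symmetric: if two distinct $x_i, x_{i'}$ both satisfied $\|x_i - R(\theta)y_j\| \leq \delta$ and $\|x_{i'} - R(\theta)y_j\| \leq \delta$ for a common $j$, the same triangle inequality argument with $R(\theta)y_j$ as the intermediate vertex yields $\|x_i - x_{i'}\| \leq 2\delta < \Delta \leq \Delta(X)$, contradicting the definition of $\Delta(X)$. Here no isometry invocation is even needed, since the bounded distance is already between points of $X$.

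There is no real obstacle; the only subtlety worth flagging explicitly is the strict versus non-strict inequality, which is why the hypothesis $\delta < \Delta/2$ is stated as strict: $2\delta$ must fall strictly below $\Delta$ so that the comparison $2\delta < \Delta \leq \|y_j - y_{j'}\|$ produces a genuine contradiction rather than an equality that two distinct points could still achieve.
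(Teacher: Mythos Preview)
Your proof is correct and follows essentially the same approach as the paper: both argue by contradiction, apply the triangle inequality through the common point, invoke the isometry property of $R(\theta)$, and obtain $\|y_j - y_{j'}\| \leq 2\delta < \Delta$ in contradiction with the definition of $\Delta$. The paper handles the reverse direction with a one-line ``analogous argument,'' just as you do.
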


\begin{proof}

Indeed, assume there are 2 points $y_j$ and $y_{j'}$ such that:

$$||x_i-R(\theta)y_j|| \leq \delta \mbox{ and } ||x_i-R(\theta)y_{j'}|| \leq \delta$$
Reversing the terms in one of the norms and adding yields:

$$||x_i-R(\theta)y_j|| + ||R(\theta)y_{j'}- x_i|| \leq 2 \delta$$
$$\Rightarrow ||x_i-R(\theta)y_j + R(\theta)y_{j'}- x_i|| = ||R(\theta)(y_j - y_{j'})|| = ||y_j - y_{j'}|| \leq 2 \delta$$
applying the triangular inequality and then the fact that a rotation operator preserves norms. By hypothesis, we can see that this implies:
$$||y_j - y_{j'}|| < \Delta$$
a contradiction. An analogous argument can be done for each $y_j$ and two different $x_i$ and $x_{i'}$ .


\end{proof}

Before continuing, there are some terms in the double sum in \eqref{eq: energy} that can be neglected. Notice that under the assumption of Lemma \ref{condition}, the terms corresponding to $i=p$ in the double sum above satisfy $\phi\left(||x_i-R(\theta)y_j||\right) = 1$ for $j=q$ and $0$ otherwise. This is because setting $x_p$ and $y_q$ as the pivots sends them to the origin and forces them to be matched. Thanks to Lemma \ref{condition}, no other point $y_j$ can be matched to $x_p$. There is an analogous behavior for the pairs corresponding to $j=q$.

Therefore:

$$E_{p,q} = \sum_{i} \sum_{j} \phi\left(||x_i-R(\theta)y_j||\right)$$
$$ = \sum_{i\neq p} \sum_{j\neq q} \phi\left(||x_i-R(\theta)y_j||\right) + \sum_{i\neq p} \phi\left(||x_i-R(\theta)y_q||\right) + \sum_{j\neq q} \phi\left(||x_p-R(\theta)y_j||\right) + \phi\left(||x_p-R(\theta)y_q||\right)$$
$$ = \sum_{i\neq p} \sum_{j\neq q} \phi\left(||x_i-R(\theta)y_j||\right) + (M-1) + (N-1) + 0$$

This new formula will allows us to avoid considering the pivots everytime. Let us ignore the constants summed and rename the remaining sum as our new energy function. We will use $\phi$ as defined in \eqref{eq: step} from now on.

\begin{equation} \label{eq: newenergy}
E_{p,q}= \sum_{i\neq p} \sum_{j\neq q} \phi\left(||x_i-R(\theta)y_j||\right)
\end{equation}

Next, let us relate this function with our matching problem.

\begin{lemma} \label{E and K}

Let $K_{p,q}(\theta)$ be the number of pairs $(x_i,R(\theta)y_j)$ with $i \neq p$ and $j \neq q$ that, using $x_p$ and $y_q$ as pivots, are closer than $\delta$ from each other. That is, $K_{p,q}(\theta)$ is the size of the subset being matched with the  pivots $x_p$, $y_q$ and rigid motion $R(\theta)$, minus one. Then:

$$E_{p,q}(\theta) = (M-1)(N-1) - K_{p,q}(\theta)$$

\end{lemma}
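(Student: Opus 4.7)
The plan is a direct counting argument. Since $\phi$ as defined in \eqref{eq: step} is $\{0,1\}$-valued, each summand $\phi(\|x_i - R(\theta)y_j\|)$ in \eqref{eq: newenergy} contributes exactly $1$ when $\|x_i - R(\theta)y_j\| \geq \delta$, and $0$ otherwise. Consequently, $E_{p,q}(\theta)$ literally counts the number of pairs $(i,j)$ with $i \neq p$, $j \neq q$ for which $\|x_i - R(\theta)y_j\| \geq \delta$.

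First I would note that the number of pairs $(i,j)$ appearing as summation indices in \eqref{eq: newenergy} is $(M-1)(N-1)$, since $i$ ranges over $M-1$ values and $j$ over $N-1$. Then I would split these pairs into the ``far'' ones (those with $\|x_i - R(\theta)y_j\| \geq \delta$, each contributing $1$) and the ``close'' ones (those with $\|x_i - R(\theta)y_j\| < \delta$, each contributing $0$). By the definition of $K_{p,q}(\theta)$ given in the lemma, the number of close pairs is exactly $K_{p,q}(\theta)$. Therefore the number of far pairs is $(M-1)(N-1) - K_{p,q}(\theta)$, and this is the value of $E_{p,q}(\theta)$.

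I do not expect a real obstacle; the statement essentially unpacks the definition of $\phi$. The only subtlety worth a sentence in the write-up is the consistency of the two descriptions of $K_{p,q}(\theta)$ provided in the lemma, namely ``the number of close pairs'' and ``the size of the matched subset minus one''. These agree precisely because Lemma~\ref{condition} (applied under the standing assumption $\delta < \Delta/2$) guarantees that each $x_i$ lies within $\delta$ of at most one $R(\theta)y_j$ and conversely, so counting close ordered pairs is the same as counting the matched subset. The $-1$ accounts for the forced pivot match $(x_p, y_q)$, which is excluded from the indexing of \eqref{eq: newenergy}.
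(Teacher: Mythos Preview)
Your argument is correct and mirrors the paper's own proof almost verbatim: both observe that the sum in \eqref{eq: newenergy} has $(M-1)(N-1)$ summands, each equal to $1$ unless the pair is within $\delta$, and that by definition exactly $K_{p,q}(\theta)$ of them vanish. Your closing remark invoking Lemma~\ref{condition} to reconcile the two descriptions of $K_{p,q}(\theta)$ is a welcome clarification that the paper leaves implicit.
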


\begin{proof}
There are $(M-1)(N-1)$ summands in the formula, and each summand $\phi\left(||x_i-R(\theta)y_j||\right)$ will be $1$ always, except when the aforementioned condition holds, in which case it will be $0$. Since $i \neq p$ and $j \neq q$, there are $K_{p,q}(\theta)$ summands with a value of $0$ by definition, and the result holds.
\end{proof}

\begin{coro} The global minimum for the energy function \eqref{eq: newenergy} is achieved if and only if $R(\theta)$ is a rigid motion, with $x_p$ and $y_q$ as pivots, that makes as many pairs of points in $X$ and $Y$ as possible be matched. 

\begin{proof} Since $E_{p,q}(\theta) = (M-1)(N-1) - K_{p,q}(\theta)$, minimizing $E_{p,q}$ is equivalent to maximizing $K_{p,q}$, and the result follows.
\end{proof}

\end{coro}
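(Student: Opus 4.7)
The plan is to deduce this statement directly from Lemma \ref{E and K}, which supplies the identity $E_{p,q}(\theta) = (M-1)(N-1) - K_{p,q}(\theta)$. The right-hand side has two pieces: a constant $(M-1)(N-1)$ that depends only on the cardinalities of $X$ and $Y$, and the $\theta$-dependent count $K_{p,q}(\theta)$ which enters with a negative sign. Therefore, as $\theta$ ranges over $[0,2\pi)$, the function $E_{p,q}(\theta)$ attains its global minimum at exactly those values of $\theta$ at which $K_{p,q}(\theta)$ attains its global maximum. This gives both implications of the biconditional at once.

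To finish, I would translate the maximization of $K_{p,q}$ back into the language of the matching problem. By the definition in Lemma \ref{E and K}, $K_{p,q}(\theta)$ is the number of non-pivot pairs $(x_i, R(\theta)y_j)$ with $||x_i - R(\theta)y_j|| \leq \delta$; adding in the forced pair $(x_p, y_q)$, the total number of matched correspondences is $K_{p,q}(\theta)+1$. Hence, a $\theta$ that minimizes $E_{p,q}$ is precisely one that, together with the pivot choice $x_p,y_q$, achieves the largest possible $K$ in the formulation of \eqref{eq: problem}.

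The only point that could look like an obstacle is the tacit assumption that $K_{p,q}(\theta)$ really does count pairs in a well-defined bijective manner, rather than double-counting a single $x_i$ matched against several $y_j$'s (or vice versa); but this was already secured upstream by Lemma \ref{condition} under the hypothesis $\delta < \Delta/2$, which is in force throughout this section. No new estimates are therefore required, and the corollary follows in one line from Lemma \ref{E and K}.
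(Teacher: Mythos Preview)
Your argument is correct and follows exactly the same route as the paper: invoke the identity $E_{p,q}(\theta)=(M-1)(N-1)-K_{p,q}(\theta)$ from Lemma~\ref{E and K} and observe that minimizing a constant minus $K_{p,q}$ is the same as maximizing $K_{p,q}$. Your additional remarks about the pivot pair and about Lemma~\ref{condition} ensuring well-defined bijective matches are accurate context, but they go slightly beyond what the paper's one-line proof records.
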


We know that if we take $\delta$ as in Lemma \ref{condition}, it is guaranteed that the pairs of points will correspond to distinct points in $X$ and $Y$ for each pair. We close this section by showing that this condition is also necessary for the energy function in \eqref{eq: newenergy} to yield an optimal pointcloud configuration in its minimum. That is, if $\delta> \Delta /2$, there are point cloud configurations where the rigid motion of maximum overlap isn't reached at the global minimum of the energy function. There is a simple counterexample for $M=3$, $N=2$ that we now describe. Consider the cloud $X = \left\lbrace x_1, x_2, x_3 \right\rbrace$ as an equilateral triangle of side $l=2 \sqrt{3}$. The cloud $Y=\left\lbrace y_1,y_2 \right\rbrace$ is a segment of the same length as the side of the triangle. It is clear that the maximal common subsets are obtained when the segment $Y$ is placed on top on a segment of the triangle $X$, such that the points align. 

However, for a big enough $\delta$, this configuration does not yield the global minimum of the energy function (\ref{eq: energy}). For example, consider $\delta = 2$, which violates the condition stated in the past lemma, since $\Delta/2 = l/2 = \sqrt{3} < 2$. In Figure \ref{fig:counterexample}, choose a point in $X$ and a point in $Y$ as pivots. Due to symmetry, one can take $x_3$ and $y_1$ as the pivots without loss of generality. Next, rotate $Y$. Circles of radius $2$ centered in $x_i$ for $i=2,3$ are also sketched, and if $y_2$ is in the interior of any of these circles, then the summand $\phi\left(||x_i-R(\theta)y_2||\right)$ will be zero for the corresponding $i$, since the distance between $x_i$ and $y_2$ will be lesser than $\delta$.

\begin{figure}
    \centering
    \includegraphics[height=5cm]{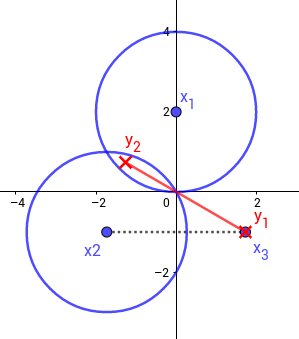}
    \caption{Counterexample illustrating that the restriction in Lemma \ref{condition} is necessary}
    \label{fig:counterexample}
\end{figure}


With the above parameters, it is easy to see that the 2 circles have at least a point in its intersection that is $l$ units away from the pivot $y_1$. Thus, a rotation $R(\theta_0)$ exists such that $y_2$ is mapped to this common region, as shown in the picture. With this configuration, we would have (counting, for each non-pivot point in $Y$, how many corresponding points are there in $X$ that are closer than $\delta$):

$$E_{3,1}(\theta_0) = (2-1) \times (3-1) - 2 = 0$$ 
whereas the value of $E_{3,1}(\theta)$ for the rigid motion of maximum overlap is $(2-1) \times (3-1)-1 =1$. This is the counterexample we sought.

\section{Implementation}

Now we simply have to find the global minimum of $E_{p,q}(\theta)$ for $\theta \in [0, 2 \pi]$, for each $p$ and $q$, and then take the minimum value among these. However, the structure and nature of the energy functions $E_{p,q}$ lead to several pitfalls in a naive implementation.

First of all, the $E_{p,q}$ are not convex functions, as it is normally arranged for in computer vision methods to easily yield the maximum via convex optimization (such as in \cite{lian2017concave}). In fact, they are piecewise constant function for all $p$ and $q$, a behavior that makes derivative-based methods useless. As such, of the generic numerical methods for optimization, only global optimizers (such as simulated annealing) are available. These require a very large number of evaluations of the function to get closer to the maximum, and the $E_{p,q}$ are very expensive computationally, being a sum of $(M-1)(N-1)$ functions.

To give an idea of the order of magnitude of a naive implementation's runtime, in its beginning stages the algorithm was tested simply by uniformly partitioning $[0,2 \pi]$ in $K$ points, and evaluating the function in these points in order to manually approximate the minimum. Each evaluation of $E_{p,q}$ runs through $(M-1)(N-1)$ function evaluations, which then has to be repeated $K$ times and $MN$ times to account for all pivots. This gives a complexity of $\mathcal{O}\left(K(M-1)(N-1)MN\right) = \mathcal{O}(KN^4)$, if we assume $\max(M,N) = N$ without loss of generality.

Executing this for $K=100$, with pointclouds of size $M=N=150$, yielded a runtime of approximately a day. Using a global optimizator would probably require a similar or even higher number of function evaluations. Vectorizing the code in MATLAB and using parallel computing (with 4 cores) led to a new runtime of three minutes for the parameters above, which is still far away from being competitive with other methods.

Without generic numerical methods available, insight on the specific structure of the function is required to optimize it efficiently. It is also necessary if an exact solution is to be found.

\subsection{Summands of $E_{p,q}(\theta)$ seen as indicator functions}

An insight on $E_{p,q}$ that allows for its optimization comes from the fact that $E_{p,q}$ is the sum of $(M-1)(N-1)$ step functions, which can be viewed as characteristic (or indicator) functions of certain subsets of $[0, 2 \pi]$. Once these sets are known, their complements yield the sets where each step function takes the value zero. Finally, the values of $\theta$ where a maximum number of these intervals overlap are exactly the values where $E_{p,q}$ reaches its global minimum.

We refer back to \eqref{eq: newenergy} and fix a pair of indices $p,q$ for the pivots in order to simplify the notation. Define $\phi_{i,j}(\theta) = \phi(||x_i - R(\theta)y_j||)$ for all $i\neq p,j\neq q$, and call $S_{i,j}$ the subset of $[0, 2 \pi]$ where $\phi_{i,j}(\theta) = 0$. This is the complement of the set of which $\phi_{i,j}$ is the characteristic function. Let us calculate it:

\begin{equation} \label{eq: indicator}
\phi_{i,j}(\theta) = 0 \Leftrightarrow \phi(||x_i - R(\theta)y_j||) = 0 \Leftrightarrow ||x_i - R(\theta)y_j|| < \delta
\end{equation}

This last condition can be studied geometrically, as is described in figure \ref{fig:ComputingTheta}. We can also solve for $\theta$ algebraically, by squaring both sides of the inequality and using properties of the norm squared:

$$\Leftrightarrow||x_i||^2 - 2 x_i \cdot R(\theta)y_j + ||y_j||^2 < \delta^2$$
$$\Leftrightarrow||x_i||^2 + ||y_j||^2 - \delta^2 <  2 x_i \cdot R(\theta)y_j$$

Since $i\neq p$ and $j \neq q$, we have $x_i \neq 0$ and $y_j \neq 0$, because all points in each pointcloud are assumed as distinct, and so only $x_p$ and $y_q$ are $0$ after the translation. We divide by $||x_i||\cdot||y_j||$:

\begin{equation}\label{eq: inequality1}
\Leftrightarrow \frac{||x_i||^2 + ||y_j||^2 - \delta^2}{2||x_i||\cdot||y_j||} < \frac{x_i}{||x_i||} \cdot R(\theta) \frac{y_j}{||y_j||} = \hat{x_i} \cdot R(\theta) \hat{y_j} 
\end{equation}
where a hat over a vector denotes the respective unit vector.

Now notice that the right hand side, being the standard inner product of 2 unit vectors (since $R(\theta)$ is norm preserving) simply equals the cosine of the angle between them. In 2 dimensions, this allows for a great simplification, since the angle between the vectors can be obtained as the absolute value of the difference of the polar angles. Let us denote the polar angle of the 2D vector $v$ by $\angle(v)$:

$$\hat{x_i} \cdot R(\theta) \hat{y_j} = \cos\left(|\angle x_i- \angle\left(R(\theta)\hat{y_j}\right)|\right) = \cos (\angle x_i- (\theta + \angle y_j))$$

Here we removed the absolute value in virtue of the parity of $\cos$, and used the fact that $R(\theta)$ increases the polar angle by $\theta$. Replacing this in \eqref{eq: inequality1}, we have:

$$ \frac{||x_i||^2 + ||y_j||^2 - \delta^2}{2||x_i||\cdot||y_j||} < \cos \left(\angle x_i- \theta - \angle y_j\right)$$

Next, we want to take the arccosine. If the left hand side is greater than $1$ or lesser than $-1$, then the inequality above is satisfied for no $\theta$ or for all $\theta$, respectively. Therefore we would have $S_{i,j}$ equal to $\emptyset$ or to $[0, 2\pi]$. Since these sets will intersect with none of the other intervals or with all of them at any $\theta$, we can ignore these cases of $i,j$ without affecting the problem of finding the angles on which the maximum number of $S_{i,j}$ intersect. (Although note that by doing this, we are losing track of the exact value of $K_{p,q}(\theta)$.)

Regardless, for the valid $i$ and $j$, define the angles $\alpha_{i,j} = \arccos \left( \frac{||x_i||^2 + ||y_j||^2 - \delta^2}{2||x_i||\cdot||y_j||} \right)$. The last inequality is equivalent to: 

$$ \angle x_i- \theta - \angle y_j \in [-\alpha_{i,j}, \alpha_{i,j}] $$
$$\Leftrightarrow \theta \in \left[\angle x_i- \angle y_j- \alpha_{i,j}\mbox{ },\angle x_i- \angle y_j +  \alpha_{i,j}\right] := I_{i,j}$$

Considering these intervals $I_{i,j}$ modulo $2 \pi$, we get the sets $S_{i,j}$ we sought. Notice that taking this modulo will yield an interval or the union of 2 non-overlapping intervals. The latter case will happen if the interval calculated above has length lesser than $2 \pi$ and has $0$ as an interior point; if this happens, in order to represent it with angles from $0$ to $2 \pi$, 2 intervals are needed: one that begins in $0$ and another one that ends in $2 \pi$.

\begin{figure}
    \centering
    \includegraphics[height=5cm]{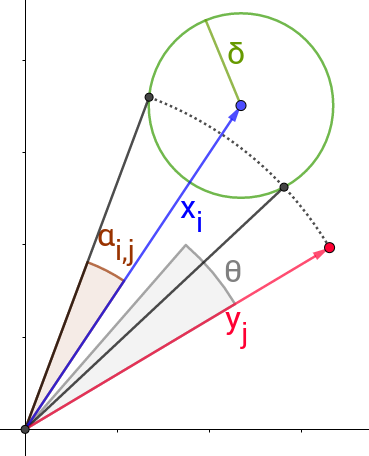}
    \caption{Geometric interpretation of the condition in \eqref{eq: indicator}. We seek the angles of rotation $\theta$ that make $y_j$ lie in a circle of center $x_i$ and radius $\delta$. These can be found by considering the points where $y_j$ would intersect the boundary of the circle as it rotates about the origin, and using them to create the cone shown in the picture. Its semiaperture, easily seen to be $\alpha_{i,j} = \arccos \left( \frac{||x_i||^2 + ||y_j||^2 - \delta^2}{2||x_i||\cdot||y_j||} \right)$ by using the law of cosines, naturally appears in the calculations above. }
    \label{fig:ComputingTheta}
\end{figure}

\subsection{Counting intersections of intervals modulo $2 \pi$ efficiently}


We have calculated the sets $I_{i,j}$ and $S_{i,j}$. Our next problem is: given these families of angle sets, find an angle where the maximum number of intervals overlap. There is a simple way to do this, which we describe below. Let $I_{i,j} = [a_{i,j}, b_{i,j}]$, and for $\theta \in [0, 2\pi]$ let $K(\theta)$ denote the number of $S_{i,j}$ that contain $\theta$. We are deliberately using the same notation as in lemma \ref{E and K}, since the condition $\theta \in S_{i,j}$ implies a noisy match between $x_i$ and $R(\theta) y_j$.

We have that $K(0)$ is the number of sets $S_{i,j}$ that contain the angle $0$. This can be calculated easily, but for the purposes of finding the angles that maximize $K$, we will see it doesn't matter. Letting $\theta$ increase, the value of $K(\theta)$ will remain constant until the variable enters or exits one of the sets $S_{i,j}$.

Keeping control of when this happens is much easier with the intervals $I_{i,j}$: if $\theta$ crosses the initial point modulo $2 \pi$ of any of the $I_{i,j}$, then $\theta$ now lies inside of one of the $S_{i,j}$, by definition, and its value will increase by 1. Similarly, if it crosses the endpoint modulo $2 \pi$ of any of the $I_{i,j}$, $\theta$ no longer belongs to one of the $S_{i,j}$ and its value will decrease by 1. It will remain constant otherwise. In particular, since there are a finite number of these initial and final points, $K(\theta)$ is a piecewise constant function.

Therefore, to maximize $K(\theta)$, one can take all of the angles $a_{ij}$ and $b_{ij}$ modulo $2 \pi$, sort them in increasing order, and then scan the resulting list: adding 1 to a counter for each $b_{ij}$ and substracting 1 for each $a_{ij}$. If any of the elements in the resulting list are equal, then do the substractions first and then the additions, so that the counter doesn't take a large transitory value that doesn't truly reflect the number of intersections. Under this scheme, the angle where the counter reaches its maximum value reveals the initial point of an interval where $K(\theta)$ attains its maximum value. We know the maximum of $K$ is reached at a whole interval because $K$ is a step function. The end point of said interval will be the next number in the sorted list (which must correspond to a $b_{ij}$ that will substract 1 from the counter, or else we wouldn't have had achieved a maximum for the counter).

\begin{algorithm}
\caption{Finding an angle where the maximum number of intersections occur, for a family of angle intervals}
\begin{algorithmic}[1]

\Require
Family of angle intervals $I_{i,j}=[a_{ij},b_{ij}]$

\Ensure
$\theta$: An angle that belongs to the maximum number of these $I_{i,j}$ modulo $2 \pi$.

\State Flatten the array $a_{ij} \mod 2 \pi$ into column A
\State Flatten the array $b_{ij} \mod 2 \pi$ into column B
\State Add a second column to $A$, with all entries equal to 1
\State Add a second column to $B$, with all entries equal to -1
\State Set $T$ equal to the 2-column matrix obtained by joining $A$ and $B$ vertically
\State Quicksort the rows of $T$ lexicographically
\State Set column vector $V$ with entries equal to the partial sums of the second column of $T$ (cummulative sum)
\State $\theta \gets T \left( \arg \max_k {V_k}, 1 \right)$

\end{algorithmic}
\end{algorithm}

Sorting lexicographically will ensure that $-1$ entries are higher in the list in the case of a tie, and thus substractions from the interval counter will be performed first. Again, note that the maximum entry of the vector $V$ is not necessarily $K_{p,q}(\theta)$, since we had to ignore certain cases of $i,j$ where the arccosine in $\alpha_{i,j}$ was not computable, and also because we are not considering the initial value $K(0)$ in our counting.

\subsection{Algorithm to find a maximal common subcloud}

We can finally state the algorithm that will take 2 pointclouds $X$ and $Y$ and find an optimal placing, in order to align a maximal common subcloud. In fact, if one were to keep track of all the angles where the counter mentioned above reaches its maximum value, one could find all maximal common subclouds. The implementation would be similar.

\begin{algorithm}
\caption{Finding the optimal placing of $X$ and $Y$}
\begin{algorithmic}[1]
\Require
\Statex $X,Y$: Pointclouds that we desire to match, encoded as $M \times 2$ and $N \times 2$ matrices respectively
\Statex $M,N$: Number of points in $X$ and $Y$ respectively
\Statex $\delta$: Tolerance for a noisy match, with $\delta \leq \frac{\Delta}{2}$

\Ensure
\Statex $P,Q$: Indices of the pivots that yield an optimal rigid motion
\Statex $\Theta$: Given the pivots $P$ and $Q$, polar angle that $Y$ requires to rotate in order to match as many points as possible with $X$. 

\For{$p$ from 1 to $M$}
    \State Translate $X$ so that $x_p \to$ origin
	\For{$q$ from 1 to $N$}
    	\State Translate $Y$ so that $y_q \to$ origin
	        \State$\alpha_{i,j} \gets \arccos\left(\frac{||x_i||^2 + ||y_j||^2 -\delta^2}{2||x_i||\cdot||y_j||} \right)$, for each valid $i,j$
	        \State $I_{i,j} \gets [\arg(x_i) - \arg(y_j)-\alpha_{i,j}, \arg(x_i) - \arg(y_j) + \alpha_{i,j}]$ for each valid $i,j$
	        \State Run algorithm 1 with the family $I_{i,j}$ above, which will yield an angle. Name it $\theta_{p,q}$.
	        
	\EndFor

\EndFor

\State $(P,Q) \gets \arg \min_{p,q} \left\lbrace E_{p,q}(\theta_{p,q}) \right\rbrace$
\State $\Theta \gets \theta_{P,Q}$

\end{algorithmic}
\end{algorithm}

Notice how algorithm 2 is entirely parallelizable: without making any changes, one may execute the first "for" loop in parallel. However, since the content of this loop is executed rather rapidly, this can result in significant overhead. Thus, a more efficient handling of this loop may be desired. In practice it was noted that parallelizing in this way with 4 cores only led to doubling the speed because of overhead.



\section{Experiments}

We verify the results and methods stated above by a series of numerical experiments, in a computer with 16 GB of RAM and a quad-core processor of 3.20 GHz per core. For all of them, gaussian noise shall be used, with  $\sigma = 0.01$, and the parameter $\delta$ shall be set to $0.01$ as well. 

Let us first illustrate with 2 concrete examples. For figure \ref{fig:sine}, the curve $y=\sin x$ sampled uniformly in $[0,2 \pi]$ by 200 points was rotated by 2 radians and then noise was added. This is the simplest scenario the algorithm is expected to handle: where $M=N$ and there are no outliers. We see how the point cloud registration was successful.

Now we propose a more challenging situation. For the first pointcloud $X$, the parametric curve $x=3\cos(t)$, $y=2\sin(t)$ is plotted over a uniform partition of 200 points in $[0,2\pi]$. However, in addition to rotating and adding noise to form the second pointcloud, several of the points are deleted, namely: those corresponding to the 51th to 69th, 111th to 169th, and 196th to 199th points of the partition. Then 50 random outliers, constructed from gaussian noise with $\sigma=2$, are added, and the result is the second pointcloud $Y$. Under these conditions, no pointcloud can be included in the other: both of them possess several outliers. The initialization, registration and common subset found are shown in figure \ref{fig:Ellipse}, and it is seen that the registration was successful.

\begin{figure}
    \centering
    \includegraphics[height=3.5cm]{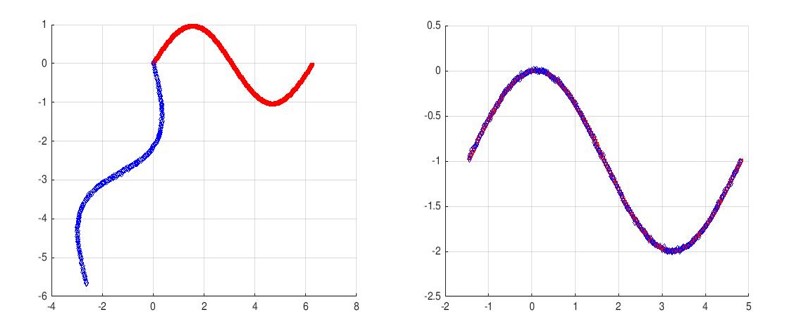}
    \caption{Left: initialization of the experiment, with a rotated version of the sinusoid with noise. Right: result of the pointcloud registration}
    \label{fig:sine}
\end{figure}

\begin{figure}
    \centering
    \includegraphics[height=3.5cm]{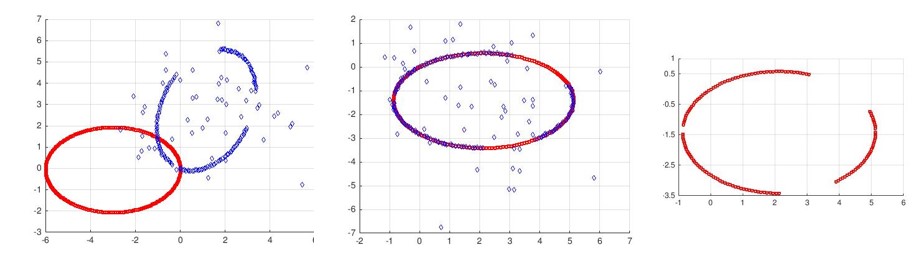}
    \caption{Left: initialization of the experiment. Center: result of the pointcloud registration. Right: maximal common subset found}
    \label{fig:Ellipse}
\end{figure}

Both of the registrations above took roughly a minute when ran under a single core, and 30 seconds if parallelized in the naive way described above. Now, after these 2 examples, we take on 2 experiments of a more quantitative nature:

\subsection{Experiment: Classification of subsets from a small pointcloud library:}

The idea for this experiment is to test the following: given a pointcloud library and a subset of one of the pointclouds which is rotated and then added noise, can the algorithm match the subset to the proper pointcloud consistently?

To analyze this, a collection of 50 pointclouds of 150 points each was created by taking random points from the unit disk. Then, a random cloud from the collection was taken, and random subset of 75 to 150 points was chosen from it. This subset was rotated by a random angle, and distorted by gaussian noise with $\sigma=0.01$. Our algorithm was run to find a maximal common subset for each pointcloud in the collection, and the pointcloud with which there is a maximum number of matches is chosen as the corresponding pointcloud.

This procedure was repeated 50 times. The subset was matched to the correct pointcloud every single time, with a difference in the angle of rotation found and the correct angle of much less than a degree. In fact, the 50 differences (in absolute value) analyzed had a mean of 0.0028 degrees, a standard deviation of 0.0023, and a maximum of 0.0080.

\subsection{Experiment: Matching in function of the number of points in the maximal common subset} 
This second experiment attempts to answer the following: under the conditions of a noisy match (for example, those stated at the beginning of this section, with $\delta = \sigma =0.01$), how many points in common do the 2 subclouds need to have in order to reliably do the pointcloud registration?

Had the algorithm been designed to find an exact match and work in situations with no noise, it would be expected that having 3 points in common would be enough to find the correct rigid motion that overlaps them (since then the rigid motion is uniquely determined, assuming no 2 distances between the 3 points are exactly the same). However, the existence of noise can make unrelated points align, as well as the use of noisy matches.

In order to test to what extent is the pointcloud registration affected by the size of the common subset, we take a random pointcloud of 300 points in the unit circle. We take 2 subsets $X$ and $Y$ of 150 points each from it, so that $X$ and $Y$ share $k$ points. Then $Y$ is rotated by a random angle and added the described noise. The algorithm is ran for representative values of $k$, and repeated 20 times for each of these values.

We will call each registration a success if the angle found differs from the correct angle by less than a degree. With this criterion, our results are summarized in Figure \ref{fig:plot}, which shows percentage of succesful matches in function of chosen size of the common subcloud. For $k>70$, the match was succesful every single time. As the graph shows, it is for $k \geq 30$ that the match can be done reliably under the aforementioned conditions. This amounts to both pointclouds having a $20 \%$ of points in common, or more.

\begin{figure}
    \centering
    \includegraphics[height=4cm]{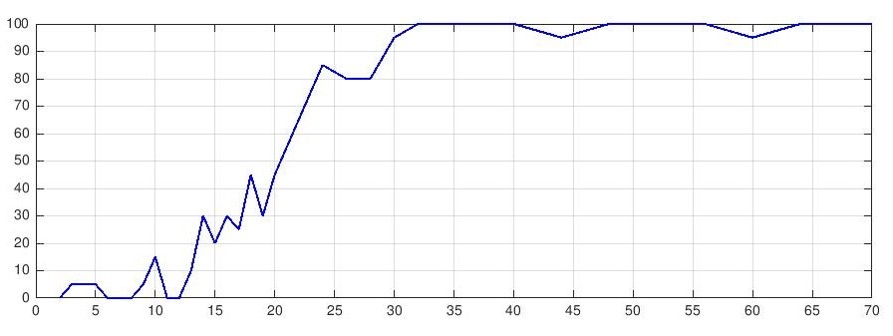}
    \caption{Percentage of succesful registrations as a function of the number of common points.}
    \label{fig:plot}
\end{figure}



\bibliography{biblio}
\bibliographystyle{abbrv}

\end{document}